\newtheorem{theorem}{Theorem}
\newtheorem{lemma}[theorem]{Lemma}
\newtheorem{corollary}[theorem]{Corollary}
\newtheorem{definition}{\textbf{Definition}}
\title{On the Probabilistic Learnability of Compact
Neural Network Preimage Bounds}
\author{
    %Authors
    % All authors must be in the same font size and format.
    Luca Marzari, Manuele Bicego, Ferdinando Cicalese and Alessandro Farinelli}
\begin{document}

\maketitle

\begin{abstract}
Although recent provable methods have been developed to compute preimage bounds for neural networks, their scalability is fundamentally limited by the \#P-hardness of the problem. In this work, we adopt a novel probabilistic perspective, aiming to deliver solutions with high-confidence guarantees and bounded error. To this end, we investigate the potential of bootstrap-based and randomized approaches that are capable of capturing complex patterns in high-dimensional spaces, including input regions where a given output property holds. 
In detail, we introduce \textbf{R}andom \textbf{F}orest \textbf{Pro}perty \textbf{Ve}rifier (\texttt{RF-ProVe}), a method that exploits an ensemble of randomized decision trees to generate candidate input regions satisfying a desired output property and refines them through active resampling. Our theoretical derivations offer formal statistical guarantees on region purity and global coverage, providing a practical, scalable solution for computing compact preimage approximations in cases where exact solvers fail to scale.
\end{abstract}

% Uncomment the following to link to your code, datasets, an extended version or similar.
% You must keep this block between (not within) the abstract and the main body of the paper.
\begin{links}
    \link{Code}{https://github.com/lmarza/ProbVerNet}
    %\link{Extended version}{https://aaai.org/example}
\end{links}

\section{Introduction}
\label{sec:introduction}

The ability of Deep neural networks (DNNs) to learn complex patterns from vast amounts of data has allowed them to tackle challenging tasks in several domains \citep{image,manipulation2,eps_retrain}. However, as DNNs become more powerful and pervasive, safety concerns have become increasingly prominent. In particular, DNNs are often considered "black-box" systems, meaning their internal representation is not fully transparent. A crucial weakness of DNNs is the vulnerability to adversarial attacks \citep{adversarial, amir2022verifying}, wherein small, imperceptible modifications to input data can lead to wrong and potentially catastrophic decisions when deployed. 

On top of standard \textsc{DNN-Verification} \citep{LiuSurvey,crown,acrown,bcrown, wei2024modelverification}, which aims to establish provable guarantees that the network adheres to specific formal specifications,  
recent works \citep{CountingProVe,kotha2023provably,zhang2024provable}, based on seminal results of \cite{dathathri2019inverse,matoba2020exact}, have formalized
the quantitative version of the verification problem, namely identifying the subset of a desired input region where a DNN produces (or not) a desired output. This problem is formally defined as \textsc{AllDNN-Verification} or provable DNNs' preimage bound computation.\footnote{We note that \citet{CountingProVe} and \citet{kotha2023provably} independently and contemporaneously addressed the same underlying problem under different names. In this work, we use \textsc{AllDNN-Verification} problem or bounding the DNN’s preimage interchangeably.}  
Computing the preimage bound provides a more informative and fine-grained characterization of the model’s behavior, enabling the quantification and localization of the full region of inputs that lead to unsafe outputs, rather than relying on the mere existence of (possibly) isolated counterexamples. This information can be used to guide model debugging, improve training procedures through targeted data augmentation, and inform safe recovery strategies by identifying and avoiding risky regions during deployment. In this context, producing compact representations of such unsafe regions is crucial to enhance explainability and support safer fallback mechanisms, as compact regions are easier to interpret.

However, as for most of the classical enumeration problems (e.g., \textsc{AllSAT} \citep{valiant1979complexity}), the exact enumeration of neural network preimage bounds is computationally prohibitive, as the problem has been shown to be \#P-hard \citep{CountingProVe}. 
To circumvent such a problem, recent efforts \cite{zhang2024provable, zhang2025premap} have explored the combination of sound under- and over-approximations to approximate the preimage bounds of a neural network with a set of polytopes as compact as possible. Nonetheless, these solutions still face significant scalability issues due to the reliance on a provably sound solution. We argue that the \#P-hardness of the problem and its intractability necessitate novel probabilistic solutions that balance computational feasibility with accuracy. Specifically, in this work, we investigate an approximate variant of the \textsc{AllDNN-Verification} problem which is probabilistically solvable, that is, we devise an efficient algorithm that delivers an  {\em approximate and compact} solution with high-confidence guarantees and bounded error. In a similar vein, \citep{eProve} proposes a probabilistic enumeration of preimage bounds. However, their focus lies primarily on maximizing coverage, rather than on ensuring compactness of the solution. In fact, their reliance on a single decision tree to provide the solution often results in the generation of a large number of polytopes, which in complex scenarios can even exhaust memory resources, producing highly fragmented representations that are difficult to interpret and impractical for downstream tasks such as safe recovery or explanation.
In contrast, in this work, we explore the potential of bootstrap-based and randomized approaches that are capable of capturing complex patterns in high-dimensional spaces, including input regions where a given output property holds. Our probabilistic bounds are from the realm of  \textit{statistical prediction on tolerance limits} \citep{wilks}, which enable high-confidence guarantees on region purity and global coverage.

Specifically, we present \textbf{R}andom \textbf{F}orest-\textbf{Pro}perty \textbf{Ve}rifier (\texttt{RF-ProVe}), a novel probabilistic approach based on a random forest-inspired classifier. In detail, we exploit an ensemble of randomized decision trees structurally similar to a random forest, but without relying on the traditional majority voting scheme for classification \cite{breiman2001random}.\footnote{Throughout the paper, we slightly abuse notation by referring to this ensemble as a random forest, even though it does not employ majority voting.} This choice is motivated by the goal of representing the preimage bounds of a neural network as axis-aligned boxes. Alternative representations, such as unions of halfspaces, are computationally more complex and often less interpretable \cite{blumer1989learnability}. Although random forests implicitly partition the input space into axis-aligned regions, they are not represented in an explicit way. To address this, we extract axis-aligned boxes directly from the decision paths leading to the leaves of the trees. However, while these leaf regions may appear pure (e.g., according to the Gini index), their reliability could be compromised by limited training data. To mitigate this, we employ a filtering phase based on an \textit{active resampling strategy} that validates the purity of each region. Crucially, our probabilistic guarantees, based on \citet{wilks} results, allow us to formally determine the number of resampling points needed during this filtering phase. This enables us to return a final set of regions for which we can provide high-confidence guarantees on both their individual purity and the overall coverage of the preimage.

Our empirical evaluation on standard verification benchmarks demonstrates that \texttt{RF-ProVe} provides a valuable probabilistic framework for challenging instances that are difficult to verify with exact or provable solvers, producing compact solutions with fewer polytopes compared to existing approaches for the 
(approximate) \textsc{AllDNN-Verification} problem.

In summary, the contributions of this paper are:
\begin{itemize}
    % \item We introduce a novel probabilistic perspective on the \textsc{AllDNN-Verification} problem.
    \item We present \texttt{RF-ProVe}, a random forest-based method that combines passive learning with an active resampling strategy to efficiently approximate unions of axis-aligned boxes representing compact neural network preimages.
    \item We develop probabilistic bounds based on \citet{wilks} statistical tolerance limits, providing high-confidence assurances on the purity and coverage of the extracted input regions, guaranteeing a scalable and practical approximate solution to the (\#P-hard)  exact verification problem.
\end{itemize}

%======= PRELIMINARIES

\section{Preliminaries and Related Work}
In this section, we provide the reader with all the necessary basic definitions and notation on \textsc{allDNN-Verification} to easily follow the paper. Moreover, we discuss related work on the problem we aim to address.

Consider a deep neural network $f:\mathbb{R}^N\to\mathbb{R}$ and a safety property $\mathcal{P} = \langle\mathcal{X}, \mathcal{Y}\rangle$ to be verified. In detail, a safety property encodes an input-output relationships for $f$ and it is composed of a precondition on the input $\mathcal{X} \subset \mathbb{R}^N$, that identifies a portion of the input space where we want a specific postcondition $\mathcal{Y}$ to be satisfied on the output of $f$. Without loss of generality, in the following, we assume that the DNNs we verify have a single output node, i.e., performing a binary classification. One can simply enforce this condition for networks that do not satisfy this assumption by adding one layer and encoding the requirements of $\mathcal{Y}$ in a single output node as a margin between logits, which is positive if only if the property is respected \citep{LiuSurvey,bcrown}.

\subsection{ALLDNN-Verification or DNN's Preimages Bounds Computation}

\begin{figure}[b]
    \centering
    \includegraphics[width=\linewidth]{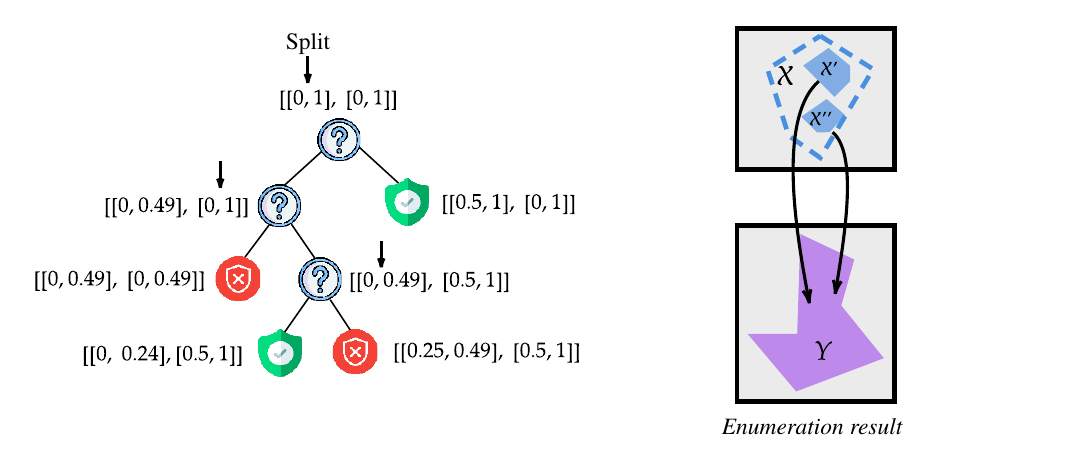}
    \caption{Illustrative overview of \textsc{AllDNN-Verification} problem.}
    \label{fig:enum}
\end{figure}

The \textsc{AllDNN-Verification} problem \cite{CountingProVe}, also referred to as exact preimage bounds of a neural network \cite{matoba2020exact, kotha2023provably, zhang2024provable}, asks for the subset of points in the input space ${\cal X}$ that a given function $f$ maps to a given subset ${\cal Y}$ of output values, i.e., the pre-image of ${\cal Y}$ with respect to $f.$

\begin{definition}[AllDNN-Verification Problem]\label{def:alldnn_ver}
\phantom{abc}

{\bf Input}: A tuple $\mathcal{T}=\langle f, \mathcal{X},\mathcal{Y}\rangle$. 

{\bf Output}: $\Gamma(\mathcal{T}) = \Big\{ x \in  \mathcal{X} \mid f(x) \in \mathcal{Y}\Big\}$.
\end{definition}

%{\color{red} 
For the sake of simplifying the presentation, we focus on a binary classification task, and we assume that $f$ is the boolean function obtained by thresholding the single output of a DNN, i.e., such that $f(x) = 1$ iff  the output of the DNN is $\geq 0.5,$ hence we have ${\cal Y} = \{1\}$ and 
$\Gamma(\mathcal{T}) = \left\{ x \in  \mathcal{X} \mid f(x) = 1.\right\}$
%}

One possible approach to solve this challenge in an exact fashion, e.g., discovering the set of polytopes that exactly cover the volume of $\Gamma(\mathcal{T})$, $Vol(\Gamma(\mathcal{T}))$, is to leverage the branch-and-bound (BaB) \cite{BaB} process commonly used in verification and recursively record which regions are (or are not) correctly mapped into $\mathcal{Y}$, as illustrated in Fig. \ref{fig:enum}. 
However, as shown in \cite{CountingProVe}, similarly to standard verification, the number of splits either on the input or on the network's non-linearities required in the worst case can grow exponentially, since the problem is \#P-hard.  
Recent progress has been made through \textit{linear relaxation} techniques \cite{crown, acrown, bcrown, autolirpa}, which over-approximate the network’s non-linear behavior and enable backward analysis to compute conservative estimates of the preimage. However, approaches like \cite{kotha2023provably} rely on sound over-approximations and still face scalability limitations, making them unsuitable for quantitative verification. To address such an issue, novel solutions have been proposed in \cite{zhang2024provable, zhang2025premap, bjorklund2025efficient} for the approximate version of the problem:

\begin{definition}[Approximate AllDNN-Verification]\label{def:approx_alldnn}
\phantom{abc}

{\bf Input}: $\mathcal{T}=\langle f, \mathcal{X},\mathcal{Y}\rangle, c \in (0,1]$. 

{\bf Output}: a set ${\cal B}= \{b_1, \dots, b_m\}$ of disjoint polytopes such that $\bigcup_i b_i \subseteq \Gamma({\cal T})$ and 
$\frac{Vol(\bigcup_i b_i)}{Vol(\Gamma(\mathcal{T}))} \geq c$.
\end{definition}

In this setting, the input includes the tuple $\mathcal{T}$ and a desired coverage ratio (c) of the volume of the preimage set $\Gamma(\mathcal{T})$. Since computing this volume in an exact fashion is computationally prohibitive, typically an estimation is computed, for example, using the Monte Carlo method obtaining $Vol(\Gamma(\mathcal{T})) = Vol(\mathcal{X}) \times \frac{1}{k} \sum_{i=1}^k \mathds{1}_{f(x_i) = 1}$
where $x_1, \ldots, x_k$ are sampled from the input domain $\mathcal{X}$, and $\mathds{1}_{f(x_i) = 1}$ indicates whether each sample is mapped to the target set $\mathcal{Y}$ encoded in $\mathcal{T}$.
The goal then is to construct a set $\mathcal{B}$ of disjoint polytopes (e.g., axis-aligned hyperrectangles) that under-approximate $\Gamma(\mathcal{T})$ while covering at least a fraction $c \in (0, 1]$ of the estimated volume $Vol(\Gamma(\mathcal{T}))$. Specifically, \cite{zhang2024provable,bjorklund2025efficient,zhang2025premap} extend the work of \cite{kotha2023provably} by introducing a novel combination of sound under- and over-approximation strategies based on neural network linearization, effectively guiding the divide-and-conquer procedure for estimating the preimage bounds set. 
Nonetheless, these approaches are deterministic and sound, but due to the absence of a theoretical bound, to guarantee a desired approximation, the algorithm needs to empirically verify it at run time by estimating the coverage via sampling, which can still lead to scalability issues, as shown also in our experiments.

In this work, we focus on a novel probabilistic relaxation of the problem, where the solution is allowed to (possibly) include some incorrect input points but guaranteeing that with confidence at least $1-\delta$ the volume of the incorrect points is bounded to at most an $\epsilon$-fraction of the returned solution, and, moreover,  this covers at least a desired portion of the exact preimage set.

\begin{definition}[Probabilistic Approximate AllDNN-Verification]\label{def:prob_approx_alldnn}
\phantom{abc}

{\bf Input}: $\mathcal{T}, c \in (0,1] \text{ and } \epsilon, \delta \in (0,1)$.

{\bf Output}: A set ${\cal B} = \{b_1, \dots, b_m\}$ of  polytopes such that, with probability at least $1-\delta$,
\[
\frac{Vol( \Gamma(\mathcal{T}) \cap \bigcup_i b_i)}{Vol(\Gamma(\mathcal{T}))} \geq c \quad (coverage)
\] 
and
\[\frac{Vol(\left\{f(x) \notin \mathcal{Y} \mid x \in \bigcup_i b_i\right\})}{Vol(\{ \bigcup_i b_i \})}
\leq \epsilon \quad (error).
\]

\end{definition}

In this vein, \cite{eProve} employs a sampling-based approach to generate probabilistically sound reachable sets and designs efficient heuristics to support the BaB verification process, ultimately collecting a set of axis-aligned hyperrectangles. However, as noted earlier, their reliance on a single decision tree often results in highly fragmented representations of the preimage bounds and, in the worst-case scenarios, can lead to memory exhaustion. In this work, we use both approaches, namely the sound under-approximation provided by \cite{zhang2025premap} and the probabilistic one provided by \cite{eProve}, as baselines for our empirical evaluation.

%===== RF-PROVE

\section{RF-ProVe: a Novel Probabilistic Approach}

While recent approximate solutions for \textsc{AllDNN-Verification} have made significant progress in efficiently addressing the problem, they often face trade-offs between scalability and provable coverage guarantees. 
To address this, we propose \texttt{RF-ProVe}, a novel probabilistic random forest learning-inspired method specifically tailored for the probabilistic \textsc{AllDNN-Verification} problem.

\begin{figure}[b]
    \centering
    \includegraphics[width=0.9\linewidth]{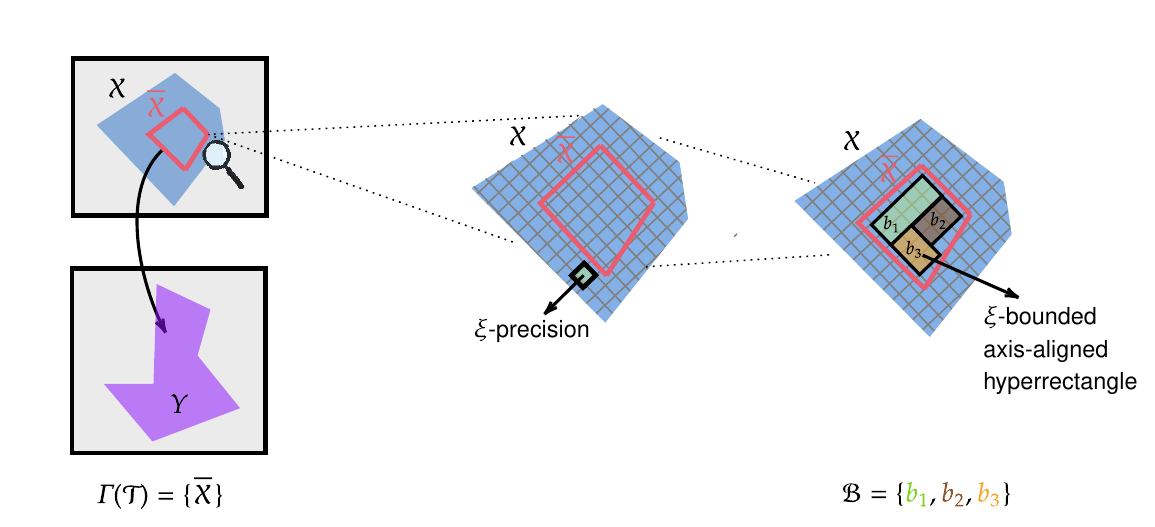}
    \caption{Explanatory image of the solution returned by our \texttt{RF-ProVe}.}
    \label{fig:rf_prove_sol}
\end{figure}

Our key idea is to leverage the potential of bootstrap and randomized-based approaches, which are well-suited for capturing complex patterns in high-dimensional spaces.
Fig.\ref{fig:rf_prove_sol} illustrates the overall problem and our proposed approach. Given a target output property $\mathcal{Y}$, our objective is to identify the corresponding region(s) in the input space, denoted as $\overline{\mathcal{X}}$, that the neural network maps into $\mathcal{Y}$.
Since the location of such input regions is not known a priori, we propose to sample labeled examples from the original input space $\mathcal{X}$ and use them to guide the construction of a collection of decision trees. In detail, these trees are used to partition $\mathcal{X}$ into subregions up to a fixed depth $D$,  which inherently defines a user-defined precision parameter $\xi = 2^{-D}$. Consequently, our goal becomes identifying, with high confidence and bounded error, a collection $\mathcal{B}$ of $\xi$-bounded axis-aligned boxes that approximate, as tightly as possible, the neural network preimage of $\mathcal{Y}$. We highlight that the discretization step does not compromise the soundness of the procedure, as the input space can be assumed to be discretized up to the resolution allowed by machine precision. Moreover, if a region cannot be resolved to the required $\xi$-precision, it is excluded from the returned set, which preserves the correctness of the final result. In fact, in the worst case, this may lead to a conservative approximation, i.e., a looser under-approximation of the true preimage bounds.
Importantly, our method leverages \textit{statistical prediction via tolerance limits} \cite{wilks} to derive novel theoretical guarantees for the use of randomized ensemble learners such as random forests on both the error within individual regions and the overall coverage of the returned set of boxes. 

\begin{algorithm}[t]
\caption{\texttt{RF-ProVe}}\label{alg:RF-ProVe}
\begin{algorithmic}[1]
\small
\STATE \textbf{Input:} $\mathcal{T} =\langle f,\mathcal{X}, \mathcal{Y}\rangle$, %\xi$ desired precision, 
$T$ \# decision trees, $D$ maximum depth,
$R$ leaf purity desired, $\delta$ confidence error, $m$ \# training examples, $k$ testing examples, $c$ desired coverage.
\STATE \textbf{Output: } $\mathcal{B}$ set of regions (hyperrectangles) satisfying $\mathcal{Y}$, estimated coverage reached.
\vspace{0.2cm}

\STATE  $\mathcal{B} \gets \emptyset$
\STATE S $\gets \texttt{GetExamples}(f, m, \mathcal{X}, \mathcal{Y})$
\STATE rf $\gets \texttt{RandomForest}$(S, T, D)
\FOR{tree in rf.trees}   
    \STATE $B\gets \texttt{GetPurePositiveLeaves}$(tree, $\mathcal{Y}$)
    \STATE $n = \frac{\ln(\delta)}{\ln(R)}$
     \STATE $\hfill \rhd$ filtering phase.
    \FOR{$b$ in $B$} 
    \IF{$\texttt{SamplesInside}(b) \geq n$}
        \STATE $\mathcal{B} \gets \mathcal{B}\;\cup\;b$
    \ELSE
    \STATE $S' \gets \texttt{GetExamples} (f, n , b, \mathcal{Y})$ 
        \IF{$f(x_i) =1  \;\forall\;x_i \in S'$}
            \STATE $\mathcal{B} \gets \mathcal{B}\;\cup\;b$
        \ENDIF
    \ENDIF
    \ENDFOR
    \STATE $\mathcal{B} \gets \texttt{RemoveDuplicateBoxes}(\mathcal{B})$
    \STATE coverage, k $\gets \texttt{EstimateCoverage}(\mathcal{B},k)$
    \IF{coverage $\geq$ c}
        \STATE break
    \ENDIF
\ENDFOR

\STATE \textbf{return} $\mathcal{B}$, coverage
\end{algorithmic}

\end{algorithm}

\subsubsection{Random Forest Classifier}

The first component of our novel probabilistic approach is a random forest-inspired classifier \cite{breiman2001random}. Given a labeled dataset $S = \{(x_i, y_i)\}_{i=1}^m$, where $x_i \in \mathbb{R}^N$ and $y_i \in \{0, 1\}$, we train a random forest with $T$ (fixed) decision trees (lines 4-5). Each tree creates a partition of the input space into axis-aligned boxes, corresponding to its leaf nodes up to a maximum predefined depth $D$ to be reached in each tree. We use the Gini criterion to maximize the purity of leaves (i.e., maximizing the probability of having leaves containing only positive or non-positive examples from $S$). Hence, after the training of the classifier, we collect all pure positive leaves (boxes containing only positive examples in $S$) across the $T$ trees and store them in $B$ (lines 6-7).

\subsubsection{Active Resampling Strategy}

Each box in the set $B$, denoted $b_i \subseteq \mathbb{R}^N$, is an axis-aligned hyperrectangle representing a candidate preimage region in the input space. These boxes are initially extracted from leaves of decision trees in the random forest that appear pure with respect to the target output property $\mathcal{Y}$, based on the Gini. However, this criterion may overestimate the true purity of a region, especially when leaves contain only a few training samples. As a result, a region may appear purely positive due to sampling bias, despite containing unobserved non-positive points. To mitigate this issue and obtain stronger probabilistic guarantees, we introduce an active resampling strategy (lines 8–19). Specifically, we compute the number of positive samples $n = \frac{\ln(\delta)}{\ln(R)}$ derived from our theoretical analysis (detailed in the next paragraph), that each candidate box $b_i$ should contain in order to be stored in the returned solution. Hence, we first verify whether a positive leaf, i.e., a $b_i$, already contains at least $n$ such samples; if it does, we include $b_i$ in $\mathcal{B}$. Otherwise, we uniformly sample $n$ new inputs from $b_i$, label them using the neural network $f$, and collect the results in a set $S'$. If all inputs $x_i \in S'$ satisfy $f(x_i) = 1$, then $b_i$ is added to $\mathcal{B}$; otherwise, it is discarded. As we will show in the next paragraph, this procedure guarantees that, with probability at least $1 - \delta$, each accepted box $b_i \in \mathcal{B}$ contains at least a fraction $R$ of its volume classified as positive. 
The boxes in $\mathcal{B}$ may partially overlap, as only full containment is eliminated by the filtering step (line 20). Notwithstanding the theoretical guarantee on the achieved coverage (Theorem \ref{theorem:coverage}), since this, in practice, may speed up the convergence, we also estimate the volume of the coverage of the current solution using a Monte Carlo estimation as in \cite{zhang2025premap} (line 21).

Specifically, we count how many new examples in a fresh test set of $k$ samples fall within at least one of the collected boxes in $\mathcal{B}$, i.e., satisfying $f(x)=1$. This empirical estimate serves as a proxy for the true volume of the positive part of the preimage under construction. If the estimated volume reached the desired coverage ratio, we stop the loop and return the solution $\mathcal{B}$ and the corresponding coverage; otherwise, we proceed (lines 22-26).

\subsubsection{Theoretical Guarantees}

In this part, we discuss the theoretical guarantees underlying our \texttt{RF-ProVe} approach. 
To this end, we begin by revisiting the key result on \textit{statistical prediction of tolerance limits} \cite{wilks}, adapting it to our specific setting.

\begin{lemma}[\citep{wilks}]\label{lemma_Wilks}
    Fix a function $g: \mathbb{R}^d \mapsto \mathbb{R}.$ For any  $R \in (0,1)$ and integer $n$, given a sample $X_1$ of $n$ values from a (continuous) set $X \subseteq \mathbb{R}^d$ the probability that for at least a fraction $R$ of the values in a further possibly infinite sequence of samples $x$ from  $X$ the value of $g(x)$ is not smaller (respectively larger) than the minimum value $\min_{x \in X_1} g(x)$ (resp.\ maximum value $\max_{x \in X_1} g(x)$) of $g$ estimated with the first $n$ samples is at least equal to $1-\delta$, where $\delta$ is the value satisfying the following equation
    \begin{equation} \label{eq:Wilks}
        1- \delta  = n \cdot \int_R^1 x^{n-1}\;dx  = (1 - R^n)
    \end{equation}
    
\end{lemma}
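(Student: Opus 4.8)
The plan is to reduce the claim to a standard computation on uniform order statistics via the probability integral transform, which simultaneously yields the exact value of $\delta$ and explains why the bound is distribution-free. First I would let $F$ be the cumulative distribution function of the real-valued random variable $Y = g(x)$ induced by drawing $x$ from the continuous distribution on $X$. Because $X$ is continuous and $g$ is fixed, I would argue that $Y$ is atomless, so $F$ is continuous and $U := F(Y)$ is uniformly distributed on $[0,1]$. Applying this to each of the $n$ points of $X_1$ yields i.i.d.\ uniforms $U_1, \dots, U_n$, where $U_i = F(g(x_i))$.

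Next I would express the coverage as a function of an order statistic. Writing $m = \min_{x \in X_1} g(x)$, the fraction of a fresh (possibly infinite) sequence of samples satisfying $g(x) \ge m$ is, conditionally on $X_1$, exactly the random quantity $C = 1 - F(m)$, where continuity lets me ignore the boundary event $g(x) = m$. Since $F$ is monotone increasing, $F(m) = \min_i U_i = U_{(1)}$ is the smallest uniform order statistic, hence $C = 1 - U_{(1)}$.

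The final step is a direct order-statistics computation. I would use that $U_{(1)} \sim \mathrm{Beta}(1,n)$, equivalently that $C = 1 - U_{(1)}$ has density $n\, x^{n-1}$ on $[0,1]$, and evaluate
\begin{equation*}
P(C \ge R) = \int_R^1 n\, x^{n-1}\, dx = 1 - R^n,
\end{equation*}
which matches the claimed value of $1 - \delta$. The ``not larger than the maximum'' variant follows by the identical argument applied to $-g$, or equivalently by noting that the coverage of $\{x : g(x) \le \max_{x \in X_1} g(x)\}$ equals $F(M) = \max_i U_i = U_{(n)}$ and that $P(U_{(n)} \ge R) = 1 - R^n$.

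I expect the only genuine subtlety to be the role of the continuity hypothesis on $X$: both the uniformity of $U = F(Y)$ and the clean identity $C = 1 - U_{(1)}$ rely on $Y = g(x)$ being atomless, so I would make explicit exactly where the ``continuous set $X$'' assumption enters and emphasize that the resulting guarantee is distribution-free, since $F$ cancels out of the final expression $1 - R^n$. A secondary point worth stating is that the lemma quantifies over a single fixed $g$ and an arbitrary underlying sampling distribution, so I would phrase the transform for this fixed $g$ rather than invoking any randomness in $g$ itself.
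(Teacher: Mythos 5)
Your proof is correct, and it is the standard argument for Wilks' tolerance-limit theorem: the probability integral transform reduces the coverage $C = 1 - F(\min_i g(x_i))$ to $1 - U_{(1)}$ for the minimum of $n$ i.i.d.\ uniforms, whose $\mathrm{Beta}(1,n)$ law gives $P(C \ge R) = \int_R^1 n\,x^{n-1}\,dx = 1 - R^n$ exactly as in Equation~(\ref{eq:Wilks}). Note that the paper itself offers no proof of this lemma --- it is imported verbatim from \citet{wilks} --- so there is nothing to compare against; your derivation is the classical one, and your observation that the continuity (atomlessness) of the law of $g(x)$ is the hypothesis doing the real work, making the bound distribution-free, is exactly the right point to make explicit, since the paper's phrasing ``continuous set $X$'' leaves it implicit.
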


\begin{corollary}\label{lemma:wilks}
Let $g\colon \mathbb{R}^N \to \mathbb [0,1]$ be a real‐valued function and let $\mathcal X\subseteq\mathbb{R}^N$ be a region of interest. 
Let $f$ be the function mapping points from $\mathbb{R}^N$ to $\{0,1\}$ defined by $f(x) = 1$ iff $g(x) \geq 1/2.$ Fix  $\delta, R\in(0,1)$ and let $n \geq \frac{\ln \delta}{\ln R}.$

Draw $n$ i.i.d.\ samples $x_1,\dots,x_n$ from $\mathcal X.$ 
Let $p = \frac{Vol(\{x \in {\cal X} \mid f(x) = 1)\})}{Vol({\cal X})},$ be the true fraction of points in $\mathcal{X}$ which are positive for $f$. 
If for each $i = 1, \dots, n$ we have $f(x_i)=1$ then $$\Pr\bigl[p < R\bigr] < \delta.$$

Equivalently, with probability at least $1-\delta$ the region $\mathcal{X}$ has at least a fraction $p\geq R$ of positive points for $f$.
\end{corollary}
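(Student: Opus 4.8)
The plan is to reduce the volume ratio to an elementary Bernoulli probability and then control a single ``all-positive'' event. First I would observe that drawing $x$ uniformly from $\mathcal{X}$ makes the volume ratio coincide with a probability: $p = \frac{Vol(\{x\in\mathcal{X}\mid f(x)=1\})}{Vol(\mathcal{X})} = \Pr_{x\sim\mathcal{X}}[f(x)=1]$. Consequently the $n$ i.i.d.\ draws $x_1,\dots,x_n$ are independent Bernoulli trials, each positive with probability exactly $p$, so the acceptance event $A=\{f(x_i)=1\text{ for all }i\}$ satisfies $\Pr[A]=p^n$ by independence.

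Next I would adopt a hypothesis-testing viewpoint, treating $\{p<R\}$ as the ``bad region'' we must not accept. The conceptual subtlety is that $p$ is a fixed (deterministic) quantity, so the assertion $\Pr[p<R]<\delta$ has to be read as a guarantee on the randomness of the accept/reject decision, i.e.\ a bound on the probability of accepting a region whose true positive fraction lies below $R$ (a Type-I / false-acceptance rate), not as a posterior over $p$. Under this reading, if $p<R$ then monotonicity of $t\mapsto t^n$ on $[0,1]$ gives $\Pr[A]=p^n<R^n$. It remains to show $R^n\le\delta$: from $n\ge \ln\delta/\ln R$ and $\ln R<0$, multiplying through by $\ln R$ reverses the inequality to $n\ln R\le\ln\delta$, hence $R^n\le\delta$. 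Combining, whenever $p<R$ the probability of observing all-positive samples is strictly below $\delta$, which is exactly the claimed joint bound $\Pr[\{p<R\}\cap A]<\delta$; the contrapositive states that, conditioned on $A$, we conclude $p\ge R$ with confidence at least $1-\delta$.

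To justify the ``Corollary'' label I would then re-derive the same conclusion directly from Lemma~\ref{lemma_Wilks}. Applying that lemma with dimension $d=N$, set $X=\mathcal{X}$, and the same $n$ and $R$, the tolerance-limit guarantee says that with probability at least $1-R^n$ the measure of $\{x : g(x)\ge \min_i g(x_i)\}$ is at least $R$. Conditioning on $A$ forces every $g(x_i)\ge 1/2$, hence $\min_i g(x_i)\ge 1/2$, so $\{x : g(x)\ge \min_i g(x_i)\}\subseteq\{x : g(x)\ge 1/2\}=\{x : f(x)=1\}$; thus its measure $R$ is a lower bound for $p$, and the Wilks confidence $1-R^n$ of Eq.~\eqref{eq:Wilks} dominates $1-\delta$ by the sample-size computation above.

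I expect the main obstacle to be conceptual rather than computational: fixing the correct meaning of $\Pr[p<R]$ when $p$ is not itself random, and being careful to phrase the result as a false-acceptance statement over the sampling rather than a statement about a random $p$. A secondary point worth stating cleanly is the equivalence, valid in the continuous uniform setting, between the ``fraction of a (possibly infinite) future sample'' appearing in Lemma~\ref{lemma_Wilks} and the volume/measure fraction $p$. Once these are pinned down, the arithmetic ($p^n<R^n\le\delta$) and the order-statistic specialization are routine.
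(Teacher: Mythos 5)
Your proof is correct. The paper gives no explicit proof of this corollary --- it is stated as an immediate specialization of Lemma~\ref{lemma_Wilks}, with the only implicit content being that $n \geq \ln\delta/\ln R$ forces $R^n \leq \delta$. Your third paragraph reconstructs exactly that intended route (all-positive samples push $\min_i g(x_i) \geq 1/2$, so the Wilks tolerance set sits inside $f^{-1}(1)$ and the confidence $1-R^n$ dominates $1-\delta$), while your first two paragraphs give a self-contained elementary alternative that bypasses Wilks entirely: in the all-positive case the tolerance-limit machinery collapses to the Bernoulli computation $\Pr[A] = p^n < R^n \leq \delta$ under the hypothesis $p < R$. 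The elementary route buys transparency and sidesteps a subtlety your Wilks-based paragraph glosses over slightly (the Wilks guarantee is unconditional over the sample draw, whereas the corollary conditions on the acceptance event $A$; the clean fix is the joint-event bound $\Pr[\{p<R\}\cap A]\leq\Pr[A\mid p<R]<\delta$, which is precisely how you phrase it in the direct argument). Your observation that $\Pr[p<R]<\delta$ must be read as a false-acceptance rate rather than a posterior over the deterministic $p$ is a correct and worthwhile clarification that the paper leaves implicit.
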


Importantly, Lemma \ref{lemma_Wilks} and Corollary \ref{lemma:wilks}  do not require any knowledge of the probability distribution governing the function of interest and thus also apply to general DNNs.

\begin{definition}[$\xi$-bounded hyperrectangle]
A rectilinear $\xi$-bounded hyperrectangle is defined as the cartesian product of intervals of size at least $\xi.$
Moreover, for $\xi > 0,$  we say that a rectilinear hyperrectangle $r = \times_i [\ell_i, u_i]$ is {\em $\xi$-aligned}
if for each $i,$ both extremes $\ell_i$ and $u_i$ are  multiples of $\xi.$ 
\end{definition}

\begin{lemma}[Positive Samples in $b^{(\xi)}$] \label{chernoff-wilks}
Let $\mathcal{X} \in \mathbb{R}^N$ be a region of interest. Fix $\xi, R, \delta \in (0,1)$ and let $n = \frac{\ln \delta}{\ln R}$ be the sample size sufficient to guarantee the bound in Lemma \ref{lemma:wilks}.
Let $Vol_{\xi} = \xi^N$ be the volume of a hyperrectangle where each side is of size $\xi.$ 
Fix $\alpha > 1$ and let $m  > \frac{n \alpha}{Vol_{\xi}},$ 
$\mu = m \cdot Vol_{\xi}.$
and $P_{\neg} = exp(-\frac{(1-\frac{1}{\alpha})^2 \mu}{2}).$ 
Consider a hyperrectangle $b^{(\xi)} \subseteq \mathcal{X}^N$ of volume $Vol_{\xi}.$
Then, the probability that among $m$ points independently and uniformly sampled from the input space ${\cal X}$ 
 less than $n$ points are from $b^{\xi}$ is 
$\leq P_{\neg}.$
\end{lemma}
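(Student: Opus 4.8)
The plan is to reduce the statement to a standard multiplicative Chernoff lower-tail bound on a binomial random variable. First I would observe that, since the $m$ points are drawn independently and uniformly from $\mathcal{X}$, the number $Z$ of them that land inside the fixed hyperrectangle $b^{(\xi)}$ is binomial, $Z \sim \mathrm{Binomial}(m,p)$, where $p = Vol_\xi / Vol(\mathcal{X})$ is the probability that a single sample falls in $b^{(\xi)}$. The definition $\mu = m \cdot Vol_\xi$ used in the statement tacitly normalizes the input space so that $Vol(\mathcal{X}) = 1$ (e.g.\ $\mathcal{X} = [0,1]^N$); under this normalization $p = Vol_\xi$ and the expected count is exactly $\mathbb{E}[Z] = mp = m \cdot Vol_\xi = \mu$.

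Next I would translate the hypothesis $m > \tfrac{n\alpha}{Vol_\xi}$ into the deviation we want to control. Multiplying by $Vol_\xi$ gives $\mu = m \cdot Vol_\xi > n\alpha$, hence $n < \mu/\alpha$. Setting $\gamma := 1 - \tfrac{1}{\alpha}$, which lies in $(0,1)$ because $\alpha > 1$, we have $\mu/\alpha = (1-\gamma)\mu$, so the event whose probability we must bound is contained in a standard lower-deviation event:
\[
\{Z < n\} \;\subseteq\; \{Z < (1-\gamma)\mu\}.
\]

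Then I would invoke the multiplicative Chernoff bound for the lower tail of a sum of independent Bernoulli variables, which states that for any $\gamma \in (0,1)$,
\[
\Pr\bigl[Z < (1-\gamma)\mu\bigr] \;\leq\; \exp\Bigl(-\tfrac{\gamma^2 \mu}{2}\Bigr).
\]
Substituting $\gamma = 1 - \tfrac{1}{\alpha}$ gives exactly $\exp\bigl(-\tfrac{(1-1/\alpha)^2 \mu}{2}\bigr) = P_{\neg}$, and combining this with the event inclusion above yields $\Pr[Z < n] \leq P_{\neg}$, which is the claim.

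The steps are all routine once the setup is fixed, so there is no deep obstacle; the only point requiring genuine care is the implicit normalization $Vol(\mathcal{X}) = 1$, which is precisely what makes $\mathbb{E}[Z] = \mu$ and lets the hypothesis $m > n\alpha/Vol_\xi$ slot cleanly into the form $(1-\gamma)\mu$. I would therefore make this normalization explicit (or equivalently carry $Vol_\xi / Vol(\mathcal{X})$ in place of $Vol_\xi$ throughout) so that the bound remains exact. A secondary detail worth a remark is that $n = \ln\delta/\ln R$ need not be an integer, but since the argument uses only the strict inequality $n < \mu/\alpha$ together with the monotonicity of the tail event, integrality plays no role.
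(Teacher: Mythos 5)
Your proof is correct and follows essentially the same route as the paper's: the paper also defines indicator variables $X_i$ for the event that the $i$th sample lands in $b^{(\xi)}$, notes $\mathbb{E}[X_i] = Vol_{\xi}$ and $\mu = \mathbb{E}[\sum_i X_i]$, and cites the Chernoff bound. You merely fill in the details the paper leaves implicit — the specific lower-tail form with $\gamma = 1 - \tfrac{1}{\alpha}$, the event inclusion $\{Z < n\} \subseteq \{Z < (1-\gamma)\mu\}$, and the normalization $Vol(\mathcal{X}) = 1$ needed for $\mathbb{E}[X_i] = Vol_{\xi}$ — all of which are accurate and worth making explicit.
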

\begin{proof}
For $i = 1, \dots, m,$ let $X_i$ be the indicator random variable of the event that the $i$th point is from $b^{(\xi)}.$
Then, we have $\mathbb{E}[X_i] = Vol_{\xi}$ and $\mu = m \mathbb{E}[X_i] = \mathbb{E}[\sum_i X_i].$ Then, 
the desired result is a direct consequence of the Chernoff bound \citep{mitzenmacher2005probability}.
\end{proof}

\begin{theorem}[Coverage Guarantees of \texttt{RF-ProVe}] \label{theorem:coverage}
Let $\mathcal{X} \in \mathbb{R}^N$ be a region of interest.
Let $\mathcal{B} = \{b_1, \dots, b_k\}$ be the collection of disjoint hyperrectangles containing all and only the input positive points of the neural network for $\mathcal{X}$, i.e., $\mathcal{B} = \cup_j b_j = f^{-1}(1),$ where $f$ is the function computed by the neural network. Assume that for each $j = 1, \dots, k,$ it holds that 
$b_j$ is $k \xi$-bounded, for some $k \geq 3,$  hence, in particular, we have $Vol(b_j) \geq k^N Vol_{\xi}.$ Let ${\cal B^*} = \bigcup_j  b_j$ be the total exact preimage bound.

Consider a random forest with $T$ random trees trained on $m$ samples, with $m$, satisfying the bound of Lemma \ref{chernoff-wilks}. 
Let ${\cal B}^A = \{b_1^A, b_2^A, \dots, b_s^A\}$, be the set of (possibly overlapping) hyperrectangles that estimate the preimage output bounds computed by \texttt{RF-ProVe}.
Then, we have that $(\frac{k-2}{k})^N Vol({\cal B}^*) \leq Vol({\cal B}^A  \cap{\cal B}^*)$  and  $Vol({\cal B}^A \cap {\cal B}^*) \geq R \;Vol(\mathcal{B}^A)$. In particular, the fraction of incorrect points (false positives) among the output boxes satisfies: $Vol(\mathcal{B}^A \setminus \mathcal{B}^*) \leq (1-R) \;Vol(\mathcal{B}^A)$.

\end{theorem}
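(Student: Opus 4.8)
The plan is to reduce both inequalities to a single geometric primitive: decompose everything to the level of the $\xi$-grid cells induced by the depth-$D$ trees, so that the (possibly overlapping) output boxes are replaced by a collection of pairwise-disjoint $\xi$-cells on which purity and coverage can be argued independently and then summed. The engine that makes this reduction valid is Lemma \ref{chernoff-wilks}: since $m$ satisfies its hypothesis, a Chernoff/union-bound argument over the finitely many $\xi$-cells meeting $\mathcal{X}$ shows that, with high probability, \emph{every} $\xi$-cell receives at least $n = \frac{\ln\delta}{\ln R}$ of the $m$ training points. I would condition on this ``dense-sampling'' event throughout; the per-cell confidence losses are later collected into the global $\delta$ by a union bound over the at most $\xi^{-N}$ cells.

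For the coverage bound, first I would isolate inside each true box $b_j$ its grid-aligned core, namely the union of $\xi$-cells entirely contained in $b_j$. Since $b_j$ is $k\xi$-bounded, along each axis the core loses at most one cell of width $\xi$ at each end, so each side shrinks by at most $2\xi$ and the per-axis ratio is at least $\frac{k\xi - 2\xi}{k\xi} = \frac{k-2}{k}$; multiplying over the $N$ axes gives $Vol(\text{core}_j)\ge \big(\frac{k-2}{k}\big)^N Vol(b_j)$. Every core cell lies in $b_j\subseteq \mathcal{B}^*$, hence is entirely positive, so under dense sampling all of its samples are positive and it is captured as (or inside) a pure positive leaf that trivially survives the filtering step, since it already holds $n$ positive samples. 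Summing the captured cores over $j$ yields $Vol(\mathcal{B}^A\cap\mathcal{B}^*)\ge \big(\frac{k-2}{k}\big)^N\sum_j Vol(b_j)=\big(\frac{k-2}{k}\big)^N Vol(\mathcal{B}^*)$, the first claim.

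For the purity/error bound, the key observation is that every accepted box $b_i^A$ is $\xi$-aligned (its faces lie on the $\xi$-grid fixed by the depth-$D$ discretization) and is therefore a disjoint union of whole $\xi$-cells; being a pure positive leaf, under dense sampling each of its cells has all $\ge n$ samples labelled positive. Applying Corollary \ref{lemma:wilks} cell by cell shows that every such cell is at least a fraction $R$ positive with confidence $1-\delta$. Although the boxes $b_i^A$ may overlap, their union $\mathcal{B}^A$ is still a \emph{disjoint} union of $\xi$-cells, so I can sum the per-cell guarantees: $Vol(\mathcal{B}^A\cap\mathcal{B}^*)=\sum_{e\subseteq\mathcal{B}^A} Vol(e\cap\mathcal{B}^*)\ge R\sum_{e}Vol(e)=R\,Vol(\mathcal{B}^A)$, from which $Vol(\mathcal{B}^A\setminus\mathcal{B}^*)=Vol(\mathcal{B}^A)-Vol(\mathcal{B}^A\cap\mathcal{B}^*)\le(1-R)Vol(\mathcal{B}^A)$ follows at once.

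I expect the main obstacle to be precisely the overlap of the output boxes: a naive box-level argument fails, because two boxes that are each only an $R$-fraction positive can concentrate their positive mass in a shared overlap and leave the union with positive fraction below $R$. The way around this is the cell-level decomposition above, which is legitimate only because (i) the $\xi$-alignment forces each box, and hence $\mathcal{B}^A$, to be a union of whole grid cells, and (ii) the dense-sampling event of Lemma \ref{chernoff-wilks} guarantees no cell is left unsampled, so that ``pure positive leaf'' genuinely propagates to ``every constituent cell is all-positive-sampled'' and Corollary \ref{lemma:wilks} can be invoked per cell. The remaining care is confidence bookkeeping: the dense-sampling failure probability of Lemma \ref{chernoff-wilks} and the at most $\xi^{-N}$ per-cell failures of Corollary \ref{lemma:wilks} must be gathered by a union bound so that all stated inequalities hold simultaneously with the advertised high probability.
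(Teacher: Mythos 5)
Your argument for the coverage lower bound is essentially the paper's: the paper invokes a lemma from $\varepsilon$-ProVe to extract, from each $k\xi$-bounded true box $b_j$, a $\xi$-aligned core $\hat b_j\subseteq b_j$ with $Vol(\hat b_j)\ge(\frac{k-2}{k})^N Vol(b_j)$, then uses Lemma \ref{chernoff-wilks} plus a union bound over the $\xi$-cells of $\hat{\cal B}$ to argue that each such cell lands, with $\ge n$ positive training points, in a pure positive leaf that survives filtering; you re-derive the same core inline (losing one $\xi$-cell per side) and run the same density argument, differing only in that the paper tracks the per-tree failure probability $P_{\neg}^{T}$ explicitly while you speak of a single training set. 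Where you genuinely diverge is the purity bound. The paper applies Corollary \ref{lemma:wilks} once per returned box $b_\ell$ and then writes $Vol({\cal B}^A\cap{\cal B}^*)\ge R\sum_i Vol(b_i^A)=R\,Vol({\cal B}^A)$, a step that silently assumes the output boxes are disjoint even though the theorem statement allows them to overlap; as you correctly observe, per-box purity $R$ does not in general transfer to the union (two boxes each $R$-pure can share their positive mass in the overlap and drop the union below $R$), and the equality $\sum_i Vol(b_i^A)=Vol({\cal B}^A)$ also fails under overlap. Your fix --- decomposing ${\cal B}^A$ into the disjoint $\xi$-cells forced by the $\xi$-aligned splits, applying Corollary \ref{lemma:wilks} cell by cell, and summing over disjoint cells --- is a cleaner route that actually repairs this gap, at the price of a larger union-bound cost (one $\delta$ per cell rather than per box), which you flag but, like the paper, do not fold into a single explicit confidence level. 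In short: same skeleton for the first inequality, a more careful and in fact more defensible decomposition for the second.
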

\begin{proof}

Recall the definitions and the notation of Lemma \ref{chernoff-wilks}. For the sake of simplifying the argument,
We will use the following lemma from \cite{eProve}, rephrased in the context of our present setting. 

\begin{lemma}\label{lemma3} \cite{eProve}
Fix a real number $\xi > 0$ and an integer $k \geq 3.$ For any $\gamma > k \xi$ and any $\gamma$-bounded rectilinear hyperrectangle 
$r \subseteq \mathbb{R}^N,$ 
there is an $\xi$-aligned rectilinear hyperrectangle $r^{(\xi)}$ such that: (i) $r^{(\xi)} \subseteq r$; and 
(ii) $Vol(r^{(\xi)}) \geq \left( \frac{k-2}{k}\right)^N Vol(r).$
\end{lemma}

By applying this lemma to each hyperrectangle $b_j$ we obtain a collection of rectilinear $\xi$-bounded and $\xi$-aligned
hyperrectangles $\hat{b_1}, \dots, \hat{b_k},$ such that for each $j=1, \dots, k,$ we have $\hat{b}_j \subseteq b_j,$ and 
$Vol(\hat{b}_j) \geq (\frac{k-2}{k})^N Vol(b_j).$ 
Let $\hat{\cal B} = \bigcup_j \hat{b}_j.$ For each $j$ and each $\xi$-aligned hyperrectangle $b^{(\xi)}$ of volume $\xi^N$ 
contained in $\hat{b}_j$ we have that the probability that for each tree the training set used for building the forest $T$ contains less than $n$ points sampled from $b^{(\xi)}$ is at most $P_{\neg}^T.$ 
Let $P_{\neg, {\cal B}}$ be the probability that for some $j \in [k]$ there is an $\xi$-aligned hyperrectangle of volume $\xi^N$ included in 
$\hat{b}_j$ such that in the training set of each tree, less than $n$ samples are from $b^{(\xi)}.$ Then, by the union bound, we have $P_{\neg, \hat{\cal B}} \leq \frac{Vol(\hat{\cal B})}{\xi^N} P_{\neg}^T.$ 
Hence, with probability $\geq 1- P_{\neg, \hat{\cal B}},$ for every $\xi$-aligned hyperrectangle $b^{(\xi)}$ of volume $\xi^N$
contained in $\hat{\cal B}$ there is at least one tree $t$ whose training set contains at least $n$ points from $b^{(\xi)}.$
Since we are assuming that our algorithm uses $\xi$-aligned splits, in each tree, the points from $b^{(\xi)}$ will all be assigned the same leaf $\ell.$  Let $b_{\ell}$ be the hyperrectangle associated to $\ell.$ Since the tree is built so that leaves are pure, the leaf $\ell$ and hence all the points in $b_{\ell}$ are classified as positive. Moreover, since $b_{\ell}$ contains $\geq n$ samples, in the output of the algorithm, there is a hyperrectangle containing $b_{\ell},$ i.e., either $b_{\ell}$ itself or some hyperrectangle that completely contains it.  
Since this holds simultaneously for every $\xi$-aligned hyperrectangle $b^{(\xi)}$ of volume $\xi^N$ contained in $\hat{\cal B}$ it follows that 
${\cal B}^A  \cap {\cal B}^* \supseteq \hat{\cal B},$ whence
$Vol({\cal B}^A  \cap {\cal B}^*) \geq Vol(\hat{\cal B}) \geq (\frac{k-2}{k})^N Vol({\cal B}^*),$ which proves the first inequality in the statement of the theorem. 

For the right inequality, we note that from $b_{\ell}$ we have sampled $\geq n$ points all testing positive. Hence, by 
Corollary \ref{lemma:wilks} with probability at least $1-\delta$, at least a fraction $R$ of  $b_{\ell}$ contains only positive points, i.e, it is part of the positive preimage.
Considering all the boxes returned, we get $Vol(\mathcal{B}^A \cap \mathcal{B}^*) \geq R \sum_i Vol(b_i^A) = R Vol(\mathcal{B}^A)$ from which directly follows $Vol(\mathcal{B}^A \setminus
\mathcal{B}^*) = Vol(\mathcal{B}^A) - Vol(\mathcal{B}^A \cap \mathcal{B}^*) \leq (1-R) Vol(\mathcal{B}^A)$, concluding the proof.
\end{proof}

These theoretical results show that the ensemble of positive leaves produced by \texttt{RF-ProVe} has strong probabilistic guarantees on both purity and coverage. Importantly, since \texttt{RF-ProVe} aggregates the positively classified regions from all $T$ trees, the total covered region $\mathcal{B}$ can only grow larger than in the single-tree case. 
In practice, it is often significantly higher, thanks to the complementary contributions from multiple trees, a phenomenon clearly confirmed by our empirical evaluation.

%====== EMPIRICAL

\section{Empirical Evaluation}

In this section, we investigate whether our new random-forest-inspired method, \texttt{RF-ProVe}, can generate more compact solutions and better scale with both the input dimensionality and the encoding constraints of the problem. We begin our empirical evaluation by analyzing how to set the hyperparameters of \texttt{RF-ProVe} to ensure probabilistic guarantees on both the confidence and the purity of the collected regions. 

\textbf{How to select the hyperparameters?}
In \texttt{RF-ProVe}, two main hyperparameters guide performance and guarantees: the training set size $m$, and the total number of resampling points $n$ used to validate leaf purity. While there is no closed-form rule for selecting $m$, as it depends on input dimension, and desired property to verify, we empirically found that using $m = 20000$ uniformly sampled examples provides a sufficiently dense coverage of the input space to populate the leaf regions of the decision trees across various depths. It also ensures that each tree receives a diverse subset of examples via bootstrapping, preserving both region purity and ensemble diversity. Larger values of $m$ yield diminishing returns while increasing training costs.
The number of total resampling points $n$ is derived from Theorem~\ref{theorem:coverage} and depends on the confidence level $1 - \delta$, the minimum required purity $R$, and the maximum number of candidate regions $|\mathcal{B}|_{max}$, which is dictated by the forest structure. For trees of depth $D$, each can produce up to $2^{D-1}$ pure positive leaves, so a forest with $T$ trees yields $|\mathcal{B}|_{max} = T \cdot 2^{D-1}$. Fig.~\ref{fig:sample_corr} (top) shows that even for $D=11$, achieving up to $1024$ boxes, the total needed resamples stay under $1.5M$ for $\delta = 0.001$ and $R = 0.995$, ensuring a very efficient solution. Crucially, rather than relying on deep trees that risk overfitting, we favor many shallow ones to enhance generalization via randomized partitions. Fig.~\ref{fig:sample_corr} (bottom) shows that even for a fixed extreme maximum number of boxes (e.g., 32000), using depths $D \in [5,7]$ allows for forests with $500$–$2000$ trees. We adopt $D=5$ in all experiments, offering a scalable and expressive partitioning of the input space.

\begin{figure}[!ht]
    \centering
    \includegraphics[width=0.8\linewidth]{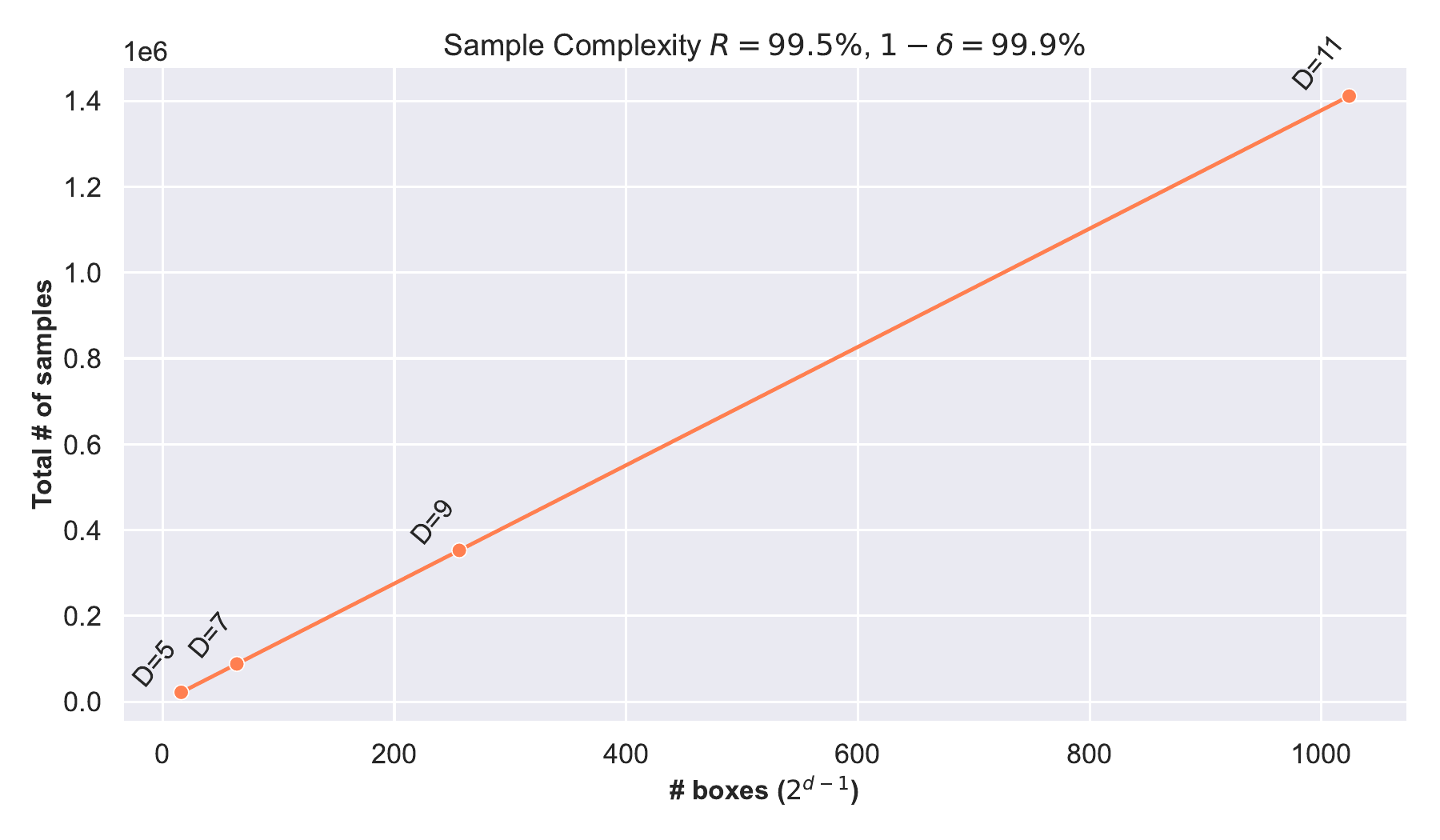}
    \includegraphics[width=0.8\linewidth]{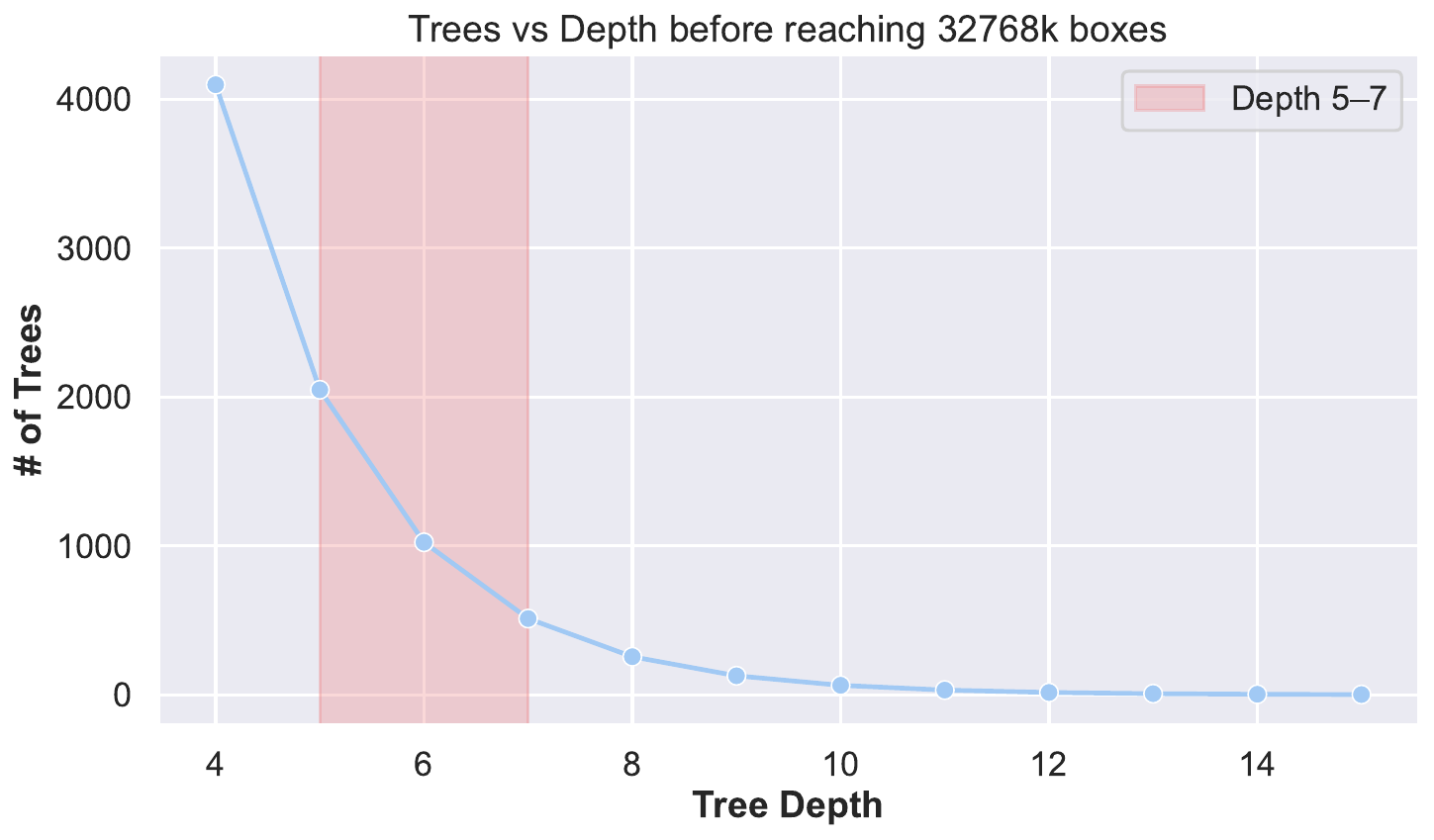}
    \caption{Correlation samples complexity, number of trees, and depth decision trees.}
    \label{fig:sample_corr}
    \vspace{-3mm}
\end{figure}

\begin{table*}[t]
    \tiny
    \centering
    \begin{tabular}{lccccccc}
    
    %\toprule
    \textbf{Method} & \textbf{Task} & \textbf{Property} & \textbf{Config} & \textbf{\#Poly} & \textbf{Coverage} & \textbf{\%error} & \textbf{Time}\\
    \toprule
    Exact & VCAS & $\{y \in \mathbb{R}^9\;|\; \wedge_{i\in [1,8]} y_0 \geq y_i\}$ & as in \cite{matoba2020exact}  & 131 & 100\% & 0\% & 6352.21s\\
    PREMAP & VCAS & $\{y \in \mathbb{R}^9\;|\; \wedge_{i\in [1,8]} y_0 \geq y_i\}$ & as in \cite{matoba2020exact}  & \textbf{15} & \textbf{90.8\%} & 0\% & 12.8s\\
     $\varepsilon$-ProVe & VCAS & $\{y \in \mathbb{R}^9\;|\; \wedge_{i\in [1,8]} y_0 \geq y_i\}$ & as in \cite{matoba2020exact}  & 122 & 90.48\% & 0.02\% & 0.65s\\
     \rowcolor{gray!20}
    \texttt{RF-ProVe} & VCAS & $\{y \in \mathbb{R}^9\;|\; \wedge_{i\in [1,8]} y_0 \geq y_i\}$ & as in \cite{matoba2020exact}  & \textbf{15} & 90.5\% & 0.06\% & 0.3s\\
    \midrule
    PREMAP & Cartpole & $\{y \in \mathbb{R}^2\;|\; y_0 \geq y_1\}$ & $\Dot{\theta} \in [-2,0]$  & 66 & 75.5\% &  0\% &32.37s\\
     $\varepsilon$-ProVe & Cartpole & $\{y \in \mathbb{R}^2\;|\; y_0 \geq y_1\}$ & $\Dot{\theta} \in [-2,0]$  & 72 & 76.47\% & 0.27\% & 2s\\
     \rowcolor{gray!20}
    \texttt{RF-ProVe} & Cartpole & $\{y \in \mathbb{R}^2\;|\; y_0 \geq y_1\}$ & $\Dot{\theta} \in [-2,0]$  & \textbf{22} & \textbf{76.8\%} & 0.3\% & 4.5s\\
    \midrule
     PREMAP & Lunarlander & $\{y \in \mathbb{R}^4\;|\; \wedge_{i\in \{0,2,3\}} y_1 \geq y_i\}$ & $\Dot{v} \in [-4,0]$  & 97 & 75.1\% & 0\%& 85.42s\\
      $\varepsilon$-ProVe & Lunarlander & $\{y \in \mathbb{R}^4\;|\; \wedge_{i\in \{0,2,3\}} y_1 \geq y_i\}$ & $\Dot{v} \in [-4,0]$  & 440 & 76.51\% & 0.5\%& 12.2s\\
     \rowcolor{gray!20}
    \texttt{RF-ProVe} & Lunarlander & $\{y \in \mathbb{R}^4\;|\; \wedge_{i\in \{0,2,3\}} y_1 \geq y_i\}$ & $\Dot{v} \in [-4,0]$  & \textbf{42} & \textbf{75.63\%} & 0.3\% & 59s\\
    \midrule
    % PREMAP & Dubinsrejoin & $\{y \in \mathbb{R}^8\;|\; (\wedge_{i\in [1,3]} y_0 \geq y_i) \bigwedge (\wedge_{i\in [5,7]} y_4 \geq y_i)\}$ & $x_v \in [-0.3,0.3]$  & 677 & 75\% & 0\% & 621.47s\\
    PREMAP & Dubinsrejoin & $\{y \in \mathbb{R}^8\;|\; (\wedge_{i\in [1,3]} y_0 \geq y_i) \bigwedge (\wedge_{i\in [5,7]} y_4 \geq y_i)\}$ & $x_v \in [-0.3,0.3]$  & 1002 & 78.7\% & 0\% & 656.47s\\
    $\varepsilon$-ProVe & Dubinsrejoin & $\{y \in \mathbb{R}^8\;|\; (\wedge_{i\in [1,3]} y_0 \geq y_i) \bigwedge (\wedge_{i\in [5,7]} y_4 \geq y_i)\}$ & $x_v \in [-0.3,0.3]$  & 4929 & 85.02\% & 0.3\% & 260.23s\\
     \rowcolor{gray!20}
    \texttt{RF-ProVe} & Dubinsrejoin & $\{y \in \mathbb{R}^8\;|\; (\wedge_{i\in [1,3]} y_0 \geq y_i) \bigwedge (\wedge_{i\in [5,7]} y_4 \geq y_i)\}$ & $x_v \in [-0.3,0.3]$  & \textbf{136} & \textbf{90.08\%} & 0.3\% & 66s\\
    
    \bottomrule
    \end{tabular}
      \caption{Empirical evaluation results of preimage approximation for reinforcement learning tasks, with Exact \cite{matoba2020exact}, PREMAP \cite{zhang2025premap}, $\varepsilon$-ProVe \cite{eProve} and \texttt{RF-ProVe} in gray proposed in this work.}
      \label{tab: tab_eval}
      \vspace{-2mm}
 \end{table*}
\textbf{Verification experiments} 
We compare \texttt{RF-ProVe} against the Exact \cite{matoba2020exact} solution, provable sound PREMAP \cite{zhang2025premap}, as well as the probabilistic approach $\varepsilon$-ProVe \cite{eProve}. All these approaches compute the preimage using unions of axis-aligned hyperrectangles, making them directly comparable in both representation and output format. In our evaluation, we consider standard verification benchmarks used in \cite{zhang2025premap}, such as the aircraft collision avoidance system (\textit{VCAS}) from \cite{julian2019reachability}, and reinforcement learning (RL) tasks, such as \textit{Cartpole}, \textit{Lunarlander}, and \textit{Dubinsrejoin}.\footnote{We refer the interested readers to \cite{zhang2025premap} for a comprehensive overview of the selected tasks.} Notably, we focus on structured, verification-relevant domains (e.g., \textit{Dubinsrejoin}) where compact preimage bounds are interpretable and actionable. Image datasets like MNIST or CIFAR lack such semantics and are less meaningful for safety analysis. Since methods like PREMAP and $\varepsilon$-ProVe already struggle with \textit{Dubinsrejoin}, higher-dimensional image inputs would add stress without offering additional insight.
To evaluate the quality of the solutions produced by the tested methods, we follow the approach proposed in \cite{zhang2025premap}, using for all approximate methods the same number of samples ($10k$) to estimate the coverage, and define a target \textit{coverage ratio} for each task. 
Given the stochastic nature of the \texttt{RF-ProVe}, results Tab. \ref{tab: tab_eval} including the number of polytopes (\# Poly), the achieved coverage, the percentage of impurity (for probabilistic methods), and the runtime across the tested models, report the average result over 3 random initializations. Moreover, we set a desired confidence in the result of $1-\delta \geq 99.9\%$ (i.e, $\delta=0.001$) and a maximum error in the final solution of $1-R \leq 0.005$ (i.e., $R=0.995$). Our goal is to compute the most compact representation of the preimage region, i.e., using the fewest number of polytopes—while achieving a target level of coverage and ensuring zero, or statistically bounded, impurity. 
All data are collected on an RTX 2070, and an i7-9700k.

\textbf{VCAS task results.} For the first task, we consider the entire set of VCAS models of the benchmark and we set a desired coverage ratio of at least $90\%$ as in \cite{zhang2025premap}. Tab. \ref{tab: tab_eval} reports the mean across all the tested models. As we can notice, the Exact method \cite{matoba2020exact} achieves full coverage but at a prohibitive cost, as it requires over 130 polytopes and takes more than 6300 seconds on average to complete. This highlights the scalability bottleneck of exact methods , which even on simpler instances struggle to scale. Importantly, our \texttt{RF-ProVe} achieves the same number of polytopes as PREMAP \cite{zhang2025premap} (15) while maintaining extremely low impurity (less than 0.1\%) but with an increase of $20\times$ faster runtime, showcasing the power of bootstrapped, data-driven strategies over fixed symbolic solvers.

\textbf{RL task results.} In this experiment, we evaluate preimage approximation methods on neural network controllers across several reinforcement learning tasks. Specifically, we target a coverage of 75\% for \textit{Cartpole} and \textit{Lunarlander}, and 90\% for the more challenging \textit{DubinsRejoin} task \citep{dubinsrejoin}. The Exact method \cite{matoba2020exact} is omitted from this evaluation, as it cannot scale to networks of this size. The results demonstrate the effectiveness of our proposed method. Across all tasks, \texttt{RF-ProVe} consistently matches or exceeds the coverage achieved by existing methods, while requiring significantly fewer polytopes and less computation time. The benefit of our approach is particularly evident in the \textit{DubinsRejoin} task, where PREMAP fails to meet the 90\% coverage target, achieving only 78.7\% coverage despite generating over 1000 polytopes and requiring more than 650 seconds. Similarly, $\varepsilon$-ProVe fails to meet the desired coverage, reaching just 85\% while producing a large number of polytopes before encountering memory issues. In contrast, \texttt{RF-ProVe} attains 90.08\% coverage using just 136 polytopes and 66 seconds, with an impurity of only 0.3\%, crucially below the $1-R=0.5\%$ desired.
This highlights a key strength of our approach: by allowing an infinitesimal error, we can efficiently approximate high-coverage preimages with high confidence, even for complex tasks where exact or provable methods are no longer practical. These results demonstrate the scalability and practical relevance of \texttt{RF-ProVe}, offering a valuable alternative for real-world safety-critical applications where soundness can be slightly relaxed in favor of crucial safety information gains.
\begin{table}[b]
    \tiny
    \centering
    \begin{tabular}{lccccc}
    
    %\toprule
    \textbf{Method} & \textbf{Task} & \textbf{\#Poly} & \textbf{Coverage} & \textbf{\%error} & \textbf{Time}\\
    \toprule
    \texttt{RF-ProVe} & Cartpole & 19 & 75.48\% & 0.39\% & 2.6s\\
     \rowcolor{gray!20}
    \texttt{RF-ProVe} & Cartpole & \textbf{22} & \textbf{76.8\%} & \textbf{0.3\%} & 4.5s\\
    \midrule
    \texttt{RF-ProVe} & Lunarlander & 190 & 76.33\% & 3.54\% & 20s\\
     \rowcolor{gray!20}
    \texttt{RF-ProVe} & Lunarlander & \textbf{42} & \textbf{75.63\%} & \textbf{0.3\%} & 59s\\
    \midrule
    \texttt{RF-ProVe} & Dubinsrejoin  & 308 & 90.26\% & 3.43\% & 39s\\
     \rowcolor{gray!20}
    \texttt{RF-ProVe} & Dubinsrejoin  & \textbf{136} & \textbf{90.08\%} & \textbf{0.3\%} & 66s\\
    \bottomrule
    \end{tabular}
      \caption{Ablation study of preimage approximation for reinforcement learning tasks, with \texttt{RF-ProVe}  without filtering phase (in white) and original (in gray).}
      \label{tab: tab_ablation}
      \vspace{-2mm}
 \end{table}
 
\textbf{Ablation study.} To assess the contribution of our active resampling strategy, we evaluate the performance of \texttt{RF-ProVe} with and without this phase. Specifically, we consider the solution of the method that skips the filtering step and directly returns the pure positive leaves selected by the Gini index from each decision tree, even if the number of positive samples in the leaf is fewer than the one derived theoretically. This isolates the effect of resampling on compactness (number of polytopes), correctness (error rate), and runtime.
Table \ref{tab: tab_ablation} summarizes the results on the RL benchmarks. Across all tasks, active resampling consistently reduces impurity by over an order of magnitude, from $>3\%$ down to less $0.5\%$, while also producing significantly more compact solutions. For instance, in the \textit{LunarLander} task, the number of polytopes drops from $190$ to $42$ with nearly identical coverage. While the resampling step introduces a moderate runtime overhead (roughly $2\times$), the added cost is negligible compared to the error reduction and interpretability gain.
These results highlight that active resampling is crucial to achieving the desired statistical guarantees of \texttt{RF-ProVe}. Without it, the method tends to overfit sparse training data, returning leaf regions that appear pure but actually include a substantial number of non-positive inputs. Hence, we can conclude that the filtering phase effectively corrects this bias by validating each candidate box using a statistically derived number of additional samples, ensuring high-confidence guarantees on region purity. Scalability experiments are reported in the appendix.

%==== DISCUSSION

\section{Discussion}

In this work, we addressed the computational intractability of exact neural network preimage bound computation by proposing a novel probabilistic framework, \texttt{RF-ProVe}. Our approach 
exploits the strength of bootstrap-based and randomized methods to capture complex structures in high-dimensional input spaces, introducing a random forest-inspired method that combines passive learning with active resampling to approximate preimage regions with high-confidence guarantees.
Our novel theoretical results provide strong probabilistic guarantees on region purity and global coverage of the returned solution. Empirically, \texttt{RF-ProVe} significantly produces compact solutions, while maintaining low impurity and high coverage, even on complex verification tasks where existing exact, provable, and probabilistic methods fail to scale. Overall, \texttt{RF-ProVe} represents a promising shift toward scalable, data-driven verification tools that retain strong probabilistic guarantees. Future work may explore its integration with hybrid verification pipelines and extensions to richer geometric representations.

\clearpage
\section*{Acknowledgement}
This work has been supported by PNRR MUR project PE0000013-FAIR.
\bibliography{aaai2026}

\clearpage
\onecolumn
\section*{Appendix}
\section{Scalability experiments}
To evaluate the scalability of \texttt{RF-ProVe}, we conducted experiments on a synthetic dataset with input dimensionalities ranging in $\{2, 5, 7, 10, 15, 20, 30\}$, thus more than doubling the dimensions used in the empirical evaluation in the main paper. For each dimensionality, we run \texttt{RF-ProVe} fixed number of training samples $m$, namely 20000, and resampling points $n=200$, to get a maximum tolerable error rate $1-R=5\%$. In all the experiments, we measure three key metrics: (i) the error rate, representing the fraction of points incorrectly classified as safe, (ii) the number of polytopes and coverage, representing the fraction of the true safe region captured by \texttt{RF-ProVe}, and (iii) the number of trees employed in the verification and their corresponding runtime, representing the computational cost of verification. 

\begin{figure}[h!]
    \centering
    \includegraphics[width=1\linewidth]{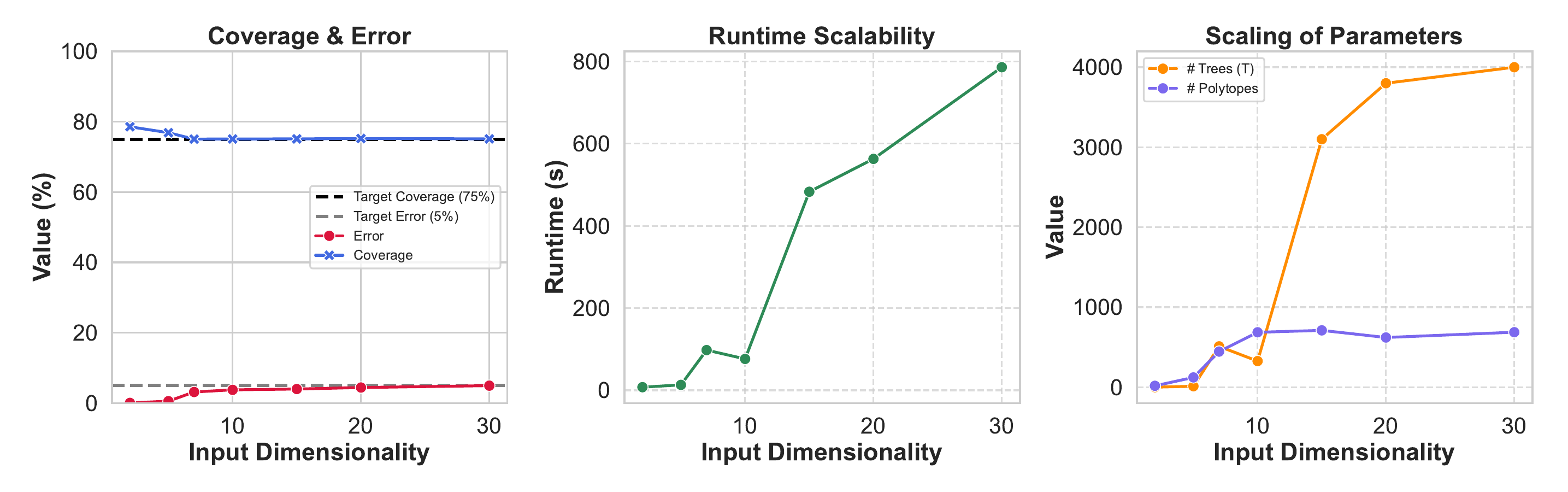}
    \caption{Scalability results on \texttt{RF-ProVe} for growing input dimensionality.}
    \label{fig:scalability}
\end{figure}

The results reported in Figure \ref{fig:scalability}, indicate that both coverage and error remain stable across increasing input dimensions, with coverage consistently above 75\% and error below 5\%. Runtime grows sublinearly, demonstrating that the approach remains computationally feasible even in high-dimensional spaces. Furthermore, the results show that, as the input dimensionality increases, both the number of trees and the number of polytopes returned by RF-ProVe grow approximately logarithmically with dimensionality, demonstrating that the method adapts efficiently to higher-dimensional spaces. This behavior highlights the effectiveness of the proposed solution in maintaining accurate coverage and low error while remaining computationally feasible.

\end{document}